\documentclass{article} 
\usepackage{iclr2022_conference,times}

\usepackage[utf8]{inputenc}
\usepackage[T1]{fontenc}
\usepackage{textcomp}

\definecolor{linkcolor}{RGB}{74, 102, 146}
\usepackage[colorlinks=true,allcolors=linkcolor,pageanchor=true,plainpages=false,pdfpagelabels,bookmarks,bookmarksnumbered]{hyperref}


\usepackage{amsmath,amsfonts,bm}









\def\eqref#1{equation~\ref{#1}}









\def\1{\bm{1}}










\DeclareMathAlphabet{\mathsfit}{\encodingdefault}{\sfdefault}{m}{sl}
\SetMathAlphabet{\mathsfit}{bold}{\encodingdefault}{\sfdefault}{bx}{n}













\usepackage{graphicx}
\usepackage{tikz}
\usepackage{hyperref}
\usepackage{url}
\usepackage{subcaption}

\usepackage{amsthm}

\newtheorem{remark}{Remark}
\newtheorem{proposition}{Proposition}
\newtheorem{definition}{Definition}

\newtheorem{theorem}{Theorem}

\usepackage{xspace}
\makeatletter
\DeclareRobustCommand\onedot{\futurelet\@let@token\@onedot}
\def\@onedot{\ifx\@let@token.\else.\null\fi\xspace}
\newcommand{\eg}{\emph{e.g}\onedot}
\newcommand{\ie}{\emph{i.e}\onedot}
\makeatother

\usepackage{algorithm, algpseudocode}
\usepackage{amssymb}

\usepackage{ bbold }
\DeclareMathOperator\supp{supp}

\usepackage[nameinlink]{cleveref}
\Crefname{section}{Sect.}{Sects.}
\Crefname{appendix}{App.}{Apps.}
\Crefname{proposition}{Prop.}{Props.}

\title{Cross-Domain Imitation Learning\\ via Optimal Transport}

\author{
  Arnaud Fickinger$^{1 3}$\thanks{\texttt{arnaud.fickinger@berkeley.edu, arnaudfickinger@fb.com}}\quad
  Samuel Cohen$^{2 3}$\quad
  Stuart Russell$^1$\quad
  Brandon Amos$^3$ \\
$^1$Berkeley AI Research\quad
$^2$University College London\quad
$^3$Facebook AI
}

\definecolor{bdacolor}{RGB}{168, 141, 201}

\newcommand{\xxnote}[3]{}
\ifx\hidenotes\undefined
  \renewcommand{\xxnote}[3]{\color{#2}{#1: #3}}
\fi

\iclrfinalcopy
\begin{document}

\maketitle

\begin{abstract}
Cross-domain imitation learning studies how to leverage expert
demonstrations of one agent to train an imitation agent with a
different embodiment or morphology. Comparing trajectories and
stationary distributions between the expert and imitation agents is
challenging because they live on different systems that may not even
have the same dimensionality.
We propose \emph{Gromov-Wasserstein Imitation Learning (GWIL)}, a method for cross-domain imitation that uses the Gromov-Wasserstein distance 
to align and compare states between the different
spaces of the agents.
Our theory formally characterizes the scenarios where GWIL preserves optimality, revealing its possibilities and limitations.
We demonstrate the effectiveness of GWIL in non-trivial continuous
control domains ranging from simple
rigid transformation of the expert domain to arbitrary
transformation of the state-action space. \footnote{Project site with videos and code: \href{https://arnaudfickinger.github.io/gwil/}{https://arnaudfickinger.github.io/gwil/}}

\end{abstract}

\section{Introduction}
Reinforcement learning (RL) methods have attained impressive results across a number of domains, e.g., \citet{berner2019dota,kober2013reinforcement,levine2016end,vinyals2019grandmaster}.
However, the effectiveness of current RL method is heavily correlated
to the quality of the training reward.
Yet for many real-world tasks, designing dense and
informative rewards require significant engineering effort.
To alleviate this effort, imitation learning (IL)
proposes to learn directly from expert demonstrations.
Most current IL approaches can be applied solely to the simplest
setting where the expert and the agent share the same
embodiment and transition dynamics that live in the same
state and action spaces.
In particular, these approaches require expert demonstrations
from the agent domain.
Therefore, we might reconsider the utility of
IL as it seems to only move the problem,
from designing informative rewards to providing expert
demonstrations, rather than solving it.
However, if we relax the constraining setting of
current IL methods, then natural imitation scenarios that genuinely
alleviate engineering effort appear.
Indeed, not requiring the same dynamics
would enable agents to imitate humans and robots with
different morphologies,
hence widely enlarging the applicability of IL and alleviating
the need for in-domain expert demonstrations.

This relaxed setting where the expert demonstrations comes from another
domain has emerged as a budding area with more realistic assumptions
\citep{gupta2017learning,liu2019state,sermanet2018time,kim2020domain,raychaudhuri2021cross}
that we will refer to as
\emph{Cross-Domain Imitation Learning.}
A common strategy of these works is to learn a mapping
between the expert and agent domains.
To do so, they require access to proxy tasks where both the
expert and the agent act optimally in there respective domains.
Under some structural assumptions, the learned map enables to
transform a trajectory in the expert domain
into the agent domain while preserving the optimality.
Although these methods indeed relax the typical setting of IL,
requiring proxy tasks heavily restrict the applicability of
Cross-Domain IL.
For example, it rules out imitating an expert never seen before
as well as transferring to a new robot.

In this paper, we relax the assumptions of Cross-Domain IL and propose
a benchmark and method that do not need access to proxy tasks.
To do so, we depart from the point of view taken by previous work and 
formalize Cross-Domain IL as an optimal transport problem.
We propose a method, that we call \emph{Gromov Wasserstein Imitation Learning
(GWIL)}, that uses the Gromov-Wasserstein distance to solve the benchmark.
We formally characterize the scenario where GWIL preserves
optimality (\cref{the:1}), revealing the possibilities and limitations.
The construction of our proxy rewards to optimize optimal
transport quantities using RL generalizes previous work that assumes
uniform occupancy measures \citep{dadashi2020primal, papagiannis2020imitation}
and is of independent interest.
Our experiments show that GWIL learns optimal behaviors with
a single demonstration from another domain without any proxy tasks
in non-trivial continuous control settings.

\section{Related Work}

\begin{figure}[t]
  \centering
  \includegraphics[width=1\textwidth]{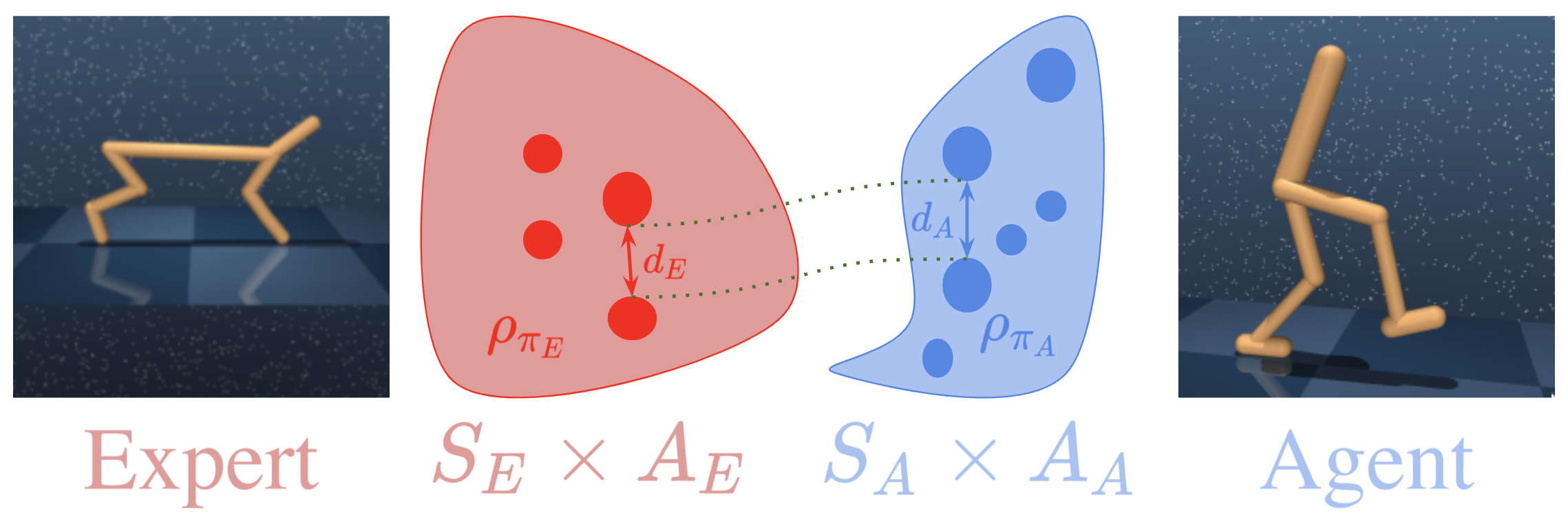}
  \caption{
    The Gromov-Wasserstein distance enables us to compare the
    stationary state-action distributions of two agents with different
    dynamics and state-action spaces. We use it as a pseudo-reward
    for cross-domain imitation learning.
  }
  \label{fig:overview}
\end{figure}

\begin{figure}[t]
  \centering
  \includegraphics[width=\textwidth]{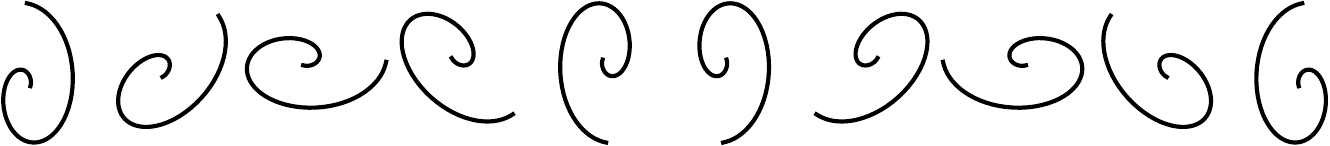}
  \vspace{-5mm}
  \caption{
    Isomorphic policies (\cref{def:2}) have the same pairwise distances
    within the state-action space of the stationary distributions.
    In Euclidean spaces, isometric transformations preserve these
    pairwise distances and include rotations, translations, and reflections.
  }
  \label{fig:isomorphic-policies}
\end{figure}

\textbf{Imitation learning.}
An early approach to IL is Behavioral Cloning
\citep{Pomerleau1988bc1,Pomerleau1991bc2} which amounts to training a
classifier or regressor via supervised learning to replicate the
expert’s demonstration.
Another key approach is Inverse Reinforcement Learning \citep{ngrussel_irl,abbeel2004apprenticeship,abbeel2010autonomous},
which aims at learning a reward function under which the
observed demonstration is optimal and can then be used to train a
agent via RL.
To bypass the need to learn the expert’s reward
function, \citet{Ho2016GAIL} show that IRL is a dual of an occupancy
measure matching problem and propose an adversarial objective whose
optimization approximately recover the expert’s state-action occupancy
measure, and a practical algorithm that uses a generative adversarial
network \citep{goodfellow2014generative}.
While a number of recent
work aims at improving this algorithm relative to the training
instability caused by the minimax optimization, Primal Wasserstein
Imitation Learning (PWIL) \citep{dadashi2020primal} and
Sinkhorn Imitation Learning (SIL) \citep{papagiannis2020imitation}
view IL as an optimal transport problem
between occupancy measures to completely eliminate the minimax
objective and outperforms adversarial methods in terms of
sample efficiency.
\citet{heess2017emergence,peng2018deepmimic,zhu2018reinforcement,aytar2018playing}
scale imitation learning to complex human-like
locomotion and game behavior in non-trivial settings.
Our work is an extension of
\citet{dadashi2020primal,papagiannis2020imitation}
from the Wasserstein to the Gromov-Wasserstein setting.
This takes us beyond limitation that the expert
and imitator are in the same domain and
into the cross-domain setting between agents that
live in different spaces.

\textbf{Transfer learning across domains and morphologies.}
Work transferring knowledge between different domains in
RL typically learns a mapping between the state and action spaces.
\citet{ammar2015unsupervised} use unsupervised manifold
alignment to find a linear map between states that have similar local
geometry but assume access to hand-crafted features. More recent work
in transfer learning across viewpoint and embodiment mismatch learn a
state mapping without handcrafted features but assume access to paired
and time-aligned demonstration from both domains
\citep{gupta2017learning,liu2018imitation,sermanet2018time}.
Furthermore, \citet{kim2020domain,raychaudhuri2021cross} propose methods
to learn a state mapping from unpaired and unaligned tasks. All these
methods require proxy tasks, \ie a set of pairs of expert
demonstrations from both domains, which limit the applicability of
these methods to real-world settings.
\citet{stadie2017third} have proposed to combine adversarial
learning and domain confusion to learn a policy in the agent’s domain
without proxy tasks but their method only works in the case of small
viewpoint mismatch.
\citet{zakka2021xirl} take a goal-driven perspective that seeks
to imitate task progress rather than match fine-grained
structural details to transfer between physical robots.
In contrast, our method does not rely on learning an
explicit cross-domain latent space between the agents,
nor does it rely on proxy tasks.
The Gromov-Wasserstein distance enables us to directly
compare the different spaces without a shared space.
The existing benchmark tasks we are aware
of assume access to a set of demonstrations from
\emph{both} agents whereas the experiments in our paper
\emph{only} assume access to expert demonstrations. Finally, other domain adaptation and transfer learning settings
use Gromov-Wasserstein variants, \eg for transfer
between word embedding spaces \citep{gwalign} and image spaces \citep{coot}.

\section{Preliminaries}
\textbf{Metric Markov Decision Process.} An infinite-horizon
discounted \emph{Markov decision Process (MDP)} is a tuple
$(S,A,R,P,p_0,\gamma)$ where $S$ and $A$ are state and action spaces,
$P:S \times A \rightarrow \Delta(S)$ is the transition function, $R: S
\times A \rightarrow \mathbb{R}$ is the reward function, $p_0 \in
\Delta(S)$ is the initial state distribution and $\gamma$ is the
discount factor. We equip MDPs with a distance $d: S
\times A \rightarrow \mathbb{R^+}$ and call the tuple
$(S,A,R,P,p_0,\gamma,d)$ a \emph{metric MDP}.

\textbf{Gromov-Wasserstein distance.}
Let $(\mathcal{X}, d_{\mathcal{X}}, \mu_{\mathcal{X}})$ and
$(\mathcal{Y}, d_{\mathcal{Y}}, \mu_{\mathcal{Y}})$ be two metric
measure spaces, where $d_{\mathcal{X}},d_{\mathcal{Y}}$ are
distances, and $\mu_{\mathcal{X}},\mu_{\mathcal{Y}}$ are
measures on their respective spaces\footnote{We use discrete spaces for readability but show empirical results in continuous spaces.}.
Optimal transport \citep{villani2009optimal,peyre2019computational}
studies how to compare measures.
We will use the \emph{Gromov-Wasserstein} distance \citep{memoli2011gromov}
between metric measure spaces, which has been theoretically generalized
and further studied in \citet{sturm2012space,peyre2016gromov,vayer2020contribution}
and is defined by 
\begin{align} \label{eq:defgw}
\mathcal{GW}((\mathcal{X}, d_{\mathcal{X}}, \mu_{\mathcal{X}}),(\mathcal{Y}, d_{\mathcal{Y}}, \mu_{\mathcal{Y}}))^2 = \min_{u \in \mathcal{U}(\mu_{\mathcal{X}},\mu_{\mathcal{Y}})} \sum_{\mathcal{X}^2 \times \mathcal{Y}^2} |d_{\mathcal{X}}(x,x') - d_{\mathcal{Y}}(y,y')|^2 u_{x,y}u_{x',y'},
\end{align}
where $\mathcal{U}(\mu_{\mathcal{X}},\mu_{\mathcal{Y}})$ is the set of couplings between the atoms of the measures defined by 
$$\mathcal{U}(\mu_{\mathcal{X}},\mu_{\mathcal{Y}}) = \left\{u \in \mathbb{R}^{\mathcal{X} \times \mathcal{Y}}\ \middle|\ \forall x \in \mathcal{X}, \sum_{y \in \mathcal{Y}} u_{x,y} = \mu_{\mathcal{X}}(x), \forall y \in \mathcal{Y}, \sum_{x \in \mathcal{X}} u_{x,y} = \mu_{\mathcal{Y}}(y)\right\}.$$

$\mathcal{GW}$ compares the structure of two metric measure spaces
by comparing the pairwise distances within each space to find
the best isometry between the spaces. Figure \ref{fig:overview} illustrates this distance in the case of the metric measure spaces $(S_E \times A_E, d_E, \rho_{\pi_E})$ and $(S_A \times A_A, d_A, \rho_{\pi_A})$.


\section{Cross-Domain Imitation Learning via Optimal Transport }

\subsection{Comparing policies from arbitrarily different MDPs}
For a stationary policy $\pi$ acting on a metric MDP
$(S,A,R,P,\gamma,d)$, the \emph{occupancy measure} is:
$$ \rho_{\pi}: S \times A \rightarrow \mathbb{R} \qquad
\rho(s,a) = \pi(a|s)\sum_{t=0}^{\infty} \gamma^t P(s_t=s|\pi).$$
We compare
policies from arbitrarily different MDPs in terms of their occupancy
measures.





\begin{definition}[Gromov-Wasserstein distance between (isomorphic classes of) policies\footnote{We later show that it is actually not a distance on policies but on isomorphic classes of policies.}]\label{def:1}
  Given an expert policy $\pi_E$ and an agent policy $\pi_A$ acting, respectively, on
  $$M_E = (S_E,A_E,R_E,P_E,T_E,d_E) \quad {\rm and} \quad M_A=(S_A,A_A,R_A,P_A,T_A,d_A).$$
  We define the Gromov-Wasserstein distance between $\pi_E$ and $\pi_A$ as the Gromov-Wasserstein distance between the metric measure spaces $(S_E \times A_E, d_E, \rho_{\pi_E})$ and $(S_A\times A_A, d_A, \rho_{\pi_A})$\footnote{We always consider a policy in the context of the underlying metric MDP, such that every policy acting on $(S,A,R,P,T,d_E)$ are different from every policy acting on $(S,A,R,P,T,d_A)$ as soon as $d_E\neq d_A$. This guarantees that the Gromov-Wasserstein distance respects the identity of indiscernibles.}:
\begin{equation}\label{eq:1}
\begin{aligned}
\mathcal{GW}(\pi, \pi') & =  \mathcal{GW}((S_E \times A_E, d_E, \rho_{\pi_E}),(S_A\times A_A, d_A, \rho_{\pi_A})) .
\end{aligned}
\end{equation}
\end{definition}

We now define isomorphisms between policies by comparing
the state-action marginals and show that $\mathcal{GW}$ defines a distance
between them.
\Cref{fig:isomorphic-policies}
illustrates simple isomorphic policies.

\begin{definition}[Isomorphic policies]\label{def:2}
  Two policies $\pi_E$ and $\pi_A$ are isomorphic if there exists
  a bijection $\phi: \supp[\rho_{\pi_E}] \rightarrow \supp[\rho_{\pi_A}]$ that
  satisfies for all $(s_E,a_E),({s_E}',{a_E}') \in \supp[\rho_{\pi_E}]$ and $(s_A,a_A) \in \supp[\rho_{\pi_A}]$:
 \begin{align}
     & d_E\left((s_E,a_E),({s_E}',{a_E}')\right) = d_A\left(\phi(s_E,a_E),\phi({s_E}',{a_E}')\right) \\
     & \rho_{\pi_A}(s_A,a_A) = \rho_{\pi_E}(\phi^{-1}(s_E,a_E))
 \end{align}
In other words, $\phi$ is an isometry between $(\supp[\rho_{\pi_E}],d_E)$ and $(\supp[\rho_{\pi_A}],d_A)$ and $\rho_{\pi_A}$ is the push-forward measure $\phi_{\sharp}(\rho_{\pi_E})$.
\end{definition}

\begin{proposition}\label{prop:1}
$\mathcal{GW}$ defines a metric on the collection of all isomorphic classes of policies.
\end{proposition}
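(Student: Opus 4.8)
The plan is to reduce the statement to the classical fact that the Gromov--Wasserstein distance is a genuine metric on the space of isomorphism classes of metric measure spaces \citep{memoli2011gromov,sturm2012space}. By \cref{def:1}, $\mathcal{GW}(\pi_E,\pi_A)$ is \emph{by definition} the Gromov--Wasserstein distance between the metric measure spaces $(S_E\times A_E, d_E, \rho_{\pi_E})$ and $(S_A\times A_A, d_A, \rho_{\pi_A})$. Consequently, non-negativity (the objective in \cref{eq:defgw} is a sum of squares weighted by non-negative coupling masses), symmetry (that objective is invariant under swapping the two spaces and transposing the coupling), and the triangle inequality are inherited immediately from the cited metric property of $\mathcal{GW}$ on metric measure spaces. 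The only policy-specific work is therefore to show that the equivalence relation cut out by $\mathcal{GW}(\pi_E,\pi_A)=0$ is exactly the isometry relation of \cref{def:2}, and that $\mathcal{GW}$ descends to a well-defined function on the resulting classes.

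First I would establish the forward implication $\mathcal{GW}(\pi_E,\pi_A)=0 \Rightarrow \pi_E,\pi_A \text{ isometric}$. The cited characterization states that $\mathcal{GW}$ between two metric measure spaces vanishes precisely when there is a measure-preserving isometry $\phi$ between their supports. Such a $\phi$ is in particular a distance-preserving bijection $\supp[\rho_{\pi_E}]\to\supp[\rho_{\pi_A}]$, which is exactly the condition of \cref{def:2}; hence the policies are isometric. For the converse I would take an isometry $\phi$ as in \cref{def:2} and exhibit a zero-cost coupling, namely the deterministic transport plan $u_{x,y}=\rho_{\pi_E}(x)\,\mathbb{1}[y=\phi(x)]$. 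Because $\phi$ preserves all pairwise distances, every factor $|d_E(x,x')-d_A(\phi(x),\phi(x'))|^2$ multiplying a nonzero product $u_{x,\phi(x)}u_{x',\phi(x')}$ vanishes, so this plan attains objective value $0$ and forces $\mathcal{GW}(\pi_E,\pi_A)=0$.

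The main obstacle is precisely the admissibility of this coupling. For $u$ to lie in $\mathcal{U}(\rho_{\pi_E},\rho_{\pi_A})$ its second marginal must equal $\rho_{\pi_A}$, which holds iff $\phi$ pushes $\rho_{\pi_E}$ forward to $\rho_{\pi_A}$, i.e.\ $\phi$ is \emph{measure preserving} and not merely a metric isometry of supports. \cref{def:2} as stated asks only for a distance-preserving bijection, so to make the converse---and hence the identity-of-indiscernibles axiom---go through I would either strengthen \cref{def:2} to require $\phi_{\#}\rho_{\pi_E}=\rho_{\pi_A}$, or record that this pushforward condition is the intended reading, matching the measure-preserving-isometry notion underlying the cited GW metric theorem. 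This reconciliation between \cref{def:2} and the zero-distance characterization is the delicate step; everything else is bookkeeping.

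Finally, once the relation of \cref{def:2} (so understood) is identified with isomorphism of the associated metric measure spaces, well-definedness on isometry classes follows: if $\pi_E\sim\pi_E'$ and $\pi_A\sim\pi_A'$ then the corresponding spaces are isomorphic, so $\mathcal{GW}(\pi_E,\pi_E')=\mathcal{GW}(\pi_A,\pi_A')=0$, and the triangle inequality then pins $\mathcal{GW}(\pi_E,\pi_A)=\mathcal{GW}(\pi_E',\pi_A')$, making the value independent of class representatives. Combined with the three axioms inherited in the first step, this shows $\mathcal{GW}$ is a metric on the collection of isometry classes of policies.
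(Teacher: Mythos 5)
Your proof follows essentially the same route as the paper's: both reduce the statement to \citet[Theorem 5.1]{memoli2011gromov}, inheriting symmetry and the triangle inequality directly and identifying the vanishing of $\mathcal{GW}$ with the isometry relation of \cref{def:2}. The difference is that you carry out the converse direction explicitly (exhibiting the deterministic zero-cost coupling $u_{x,y}=\rho_{\pi_E}(x)\,\mathbb{1}[y=\phi(x)]$) where the paper simply invokes the cited equivalence, and in doing so you surface a real imprecision: Mémoli's zero-distance characterization is in terms of \emph{measure-preserving} isometries of the supports (isomorphism of metric measure spaces), whereas \cref{def:2} as written only demands a distance-preserving bijection. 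Without the pushforward condition $\phi_{\#}\rho_{\pi_E}=\rho_{\pi_A}$, your candidate coupling need not be admissible and the ``if'' direction of identity of indiscernibles does not follow; the paper's proof glosses over this by treating its Definition~\ref{def:2} as synonymous with the notion in the cited theorem. Your proposed fix---reading \cref{def:2} as requiring the isometry to push $\rho_{\pi_E}$ forward to $\rho_{\pi_A}$---is the right one and is evidently the intended meaning, so your argument is correct modulo that clarification; the closing well-definedness remark, which the paper omits, is sound but routine.
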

\begin{proof}
By \cref{def:1}, $\mathcal{GW}(\pi_E, \pi_A)=0$ if and only if $\mathcal{GW}((S_E, d_E, \rho_{\pi_E}),(S_A, d_A, \rho_{\pi_A})) = 0$. By \citet[Theorem 5.1]{memoli2011gromov}, this is true if and only if there is an isometry $\phi: \supp[\rho_{\phi_E}] \rightarrow \supp[\rho_{\phi_A}]$ such that $\rho_{\pi_A} = \phi_{\sharp}(\rho_{\pi_E})$. By \cref{def:2}, this is true if and only if $\pi_A$ and $\pi_E$ are isomorphic. The symmetry and triangle inequality follow from \citet[Theorem 5.1]{memoli2011gromov}.
\end{proof}



The next theorem\footnote{Our proof is in finite state-action spaces for readability and can be directly extended to infinite spaces.} gives a sufficient condition to recover, by minimizing $\mathcal{GW}$, an optimal policy\footnote{A policy is optimal in the MDP $(S,A,R,P,\gamma,d)$ if it maximizes the expected return $\mathbb{E}\sum_{t=0}^{\infty}R(s_t,a_t)$.} in the agent's domain up to an isometry.  

\begin{theorem}\label{the:1}
  Consider two MDPs
  $$M_E=(S_E,A_E,R_E,P_E,p_E,\gamma) \quad {\rm and} \quad
  M_A=(S_A,A_A,R_A,P_A,p_A,\gamma).$$
  Suppose that there exists four
distances $d_E^S, d_E^A, d_A^S, d_A^A$ defined on $S_E$, $A_E$, $S_A$
and $A_E$ respectively, and two isometries
$\phi: (S_E,d_E^S) \rightarrow (S_A,d_A^S)$ and $\psi: (A_E,d_E^S) \rightarrow
(A_S,d_A^S)$ such that for all $(s_E,a_E,s_E') \in S_E \times A_E
\times S_E$ the three following conditions hold:
\begin{align}
     R(s_E,a_E) = R_A(\phi(s_E),\psi(a_E))  \label{cdt:1}\\
    {P_E}_{s_E,a_E}(s_E') = {P_A}_{\phi(s_E)\psi(a_E)}(\phi(s_E')) \label{cdt:2} \\
    p_E(s_E) = p_A(\phi(s_E)) \label{cdt:3}.
\end{align}
Consider an optimal policy $\pi^*_E$ in $M_E$. Suppose that $\pi_{GW}$ minimizes $\mathcal{GW}(\pi^*_E,\pi_{GW})$ with
$$d_E:(s_E,a_E)\mapsto d_E^S(s_E) + d_E^A(a_E) \quad {\rm and} \quad d_A:(s_A,a_A)\mapsto d_A^S(s_A)+ d^A_A(a_A).$$
Then $\pi_{GW}$ is isomorphic to an optimal policy in $M_A$.
\end{theorem}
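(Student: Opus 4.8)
The plan is to construct an explicit optimal policy in $M_A$ that lies at zero Gromov--Wasserstein distance from $\pi^*_E$, and then to transfer optimality to the minimizer $\pi_{GW}$ via \cref{prop:1}. The starting observation is that conditions \eqref{cdt:1}--\eqref{cdt:3} say exactly that the pair $(\phi,\psi)$ is an isomorphism of MDPs, matching rewards, transition kernels, and initial distributions. I would therefore define the pushforward policy $\pi_A^*$ on $M_A$ by $\pi_A^*(\psi(a_E)\mid\phi(s_E)) := \pi^*_E(a_E\mid s_E)$, which is well defined because $\phi$ and $\psi$ are bijections.

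First I would verify that $\pi_A^*$ is optimal in $M_A$. Since $(\phi,\psi)$ transports $R_E,P_E,p_E$ onto $R_A,P_A,p_A$, an induction on the Bellman recursion shows that each policy in $M_E$ and its pushforward in $M_A$ have equal value functions at corresponding states and equal expected returns; applying this to $\pi^*_E$ shows that its pushforward $\pi_A^*$ is optimal in $M_A$.

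The technical core is the occupancy-measure correspondence. Writing $\Phi(s,a):=(\phi(s),\psi(a))$, I would prove by induction on $t$ that the discounted visitation frequencies satisfy $P_A(s_t=\phi(s_E)\mid\pi_A^*)=P_E(s_t=s_E\mid\pi^*_E)$, with the base case $t=0$ given by \eqref{cdt:3} and the inductive step combining \eqref{cdt:2} with the definition of $\pi_A^*$. Multiplying by the matched action probabilities yields $\rho_{\pi_A^*}(\Phi(s_E,a_E))=\rho_{\pi^*_E}(s_E,a_E)$, so $\Phi$ restricts to a bijection $\supp[\rho_{\pi^*_E}]\to\supp[\rho_{\pi_A^*}]$. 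Because $d_E$ and $d_A$ decompose as sums of their state- and action-components and $\phi,\psi$ are isometries, $\Phi$ preserves all pairwise distances; hence $\pi^*_E$ and $\pi_A^*$ are isometric in the sense of \cref{def:2}, and \cref{prop:1} gives $\mathcal{GW}(\pi^*_E,\pi_A^*)=0$.

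To conclude, since $\mathcal{GW}\ge 0$ and $\pi_A^*$ attains the value $0$, any minimizer $\pi_{GW}$ of $\mathcal{GW}(\pi^*_E,\cdot)$ must also satisfy $\mathcal{GW}(\pi^*_E,\pi_{GW})=0$. As $\mathcal{GW}$ is a metric on isometry classes (\cref{prop:1}), $\pi_{GW}$, $\pi^*_E$, and $\pi_A^*$ lie in a single isometry class, so $\pi_{GW}$ is isometric to the optimal policy $\pi_A^*$, as claimed. I expect the main obstacle to be the inductive occupancy identity, since that is the one place where the transition and initial-distribution conditions \eqref{cdt:2}--\eqref{cdt:3} must be woven together with the policy pushforward; the remaining steps are essentially bookkeeping with isometries and two appeals to \cref{prop:1}.
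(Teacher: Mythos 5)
Your proposal is correct and follows essentially the same route as the paper: construct a policy $\pi^*_A$ in $M_A$ whose occupancy measure is the pushforward of $\rho_{\pi^*_E}$ under $(s_E,a_E)\mapsto(\phi(s_E),\psi(a_E))$, show it is optimal and isometric to $\pi^*_E$, and then transfer to $\pi_{GW}$ via \cref{prop:1}. The only cosmetic difference is that the paper defines the pushforward occupancy measure first and verifies feasibility via the Bellman flow constraints \citep[Theorem 6.9.1]{puterman2014markov} (also using that construction to pull back an arbitrary policy of $M_A$ when proving optimality), whereas you define the pushforward policy and obtain the same occupancy identity by induction on $t$ --- for your optimality step, just make explicit that the pushforward is a bijection on policies so that every competitor in $M_A$ is accounted for.
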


\begin{proof}
Consider the occupancy measure $\rho^*_A: S_A \times A_A \rightarrow
\mathbb{R}$ given by
$$(s_A,a_A) \mapsto \rho_{\pi^*_E}(\phi^{-1}(s_A), \psi^{-1}(a_A)).$$
We first show that $\rho^*_A$ is feasible in $M_A$, \ie there exists
a policy $\pi^*_A$ acting in $M_A$ with occupancy measure $\rho^*_A$
(a). Then we show that $\pi^*_A$ is optimal in $M_A$ (b) and is
isomorphic to $\pi^*_E$ (c). Finally we show that $\pi_{GW}$ is
isomorphic to $\pi^*_A$, which concludes the proof (d).

(a) Consider $s_A \in S_A$. By definition of $\rho^*_A$,
$$\sum_{a_A \in A_A}\rho^*_A(s_A) = \sum_{a_A \in A_A}\rho_{\pi^*_E}(\phi^{-1}(s_A), \psi^{-1}(a_A)) = \sum_{a_E \in A_E}\rho_{\pi^*_E}(\phi^{-1}(s_A), a_E).$$
Since $\rho_{\pi^*_E}$ is feasible in $M$, it follows from
\citet[Theorem 6.9.1]{puterman2014markov} that
$$\sum_{a_E \in A_E}\rho_{\pi^*_E}(\phi^{-1}(s_A), a_E) = p_E(\phi^{-1}(s_A)) + \gamma
\sum_{s_E \in S_E,a_E \in A_E} {P_E}_{s_E,a_E}(\phi^{-1}(s_A))+
\rho_{\pi^*_E}(s_E, a_E).$$
By conditions \ref{cdt:2} and \ref{cdt:3}
and by definition of $\rho^*_A$,
\begin{equation*}
\begin{aligned}
&p_E(\phi^{-1}(s_A)) + \gamma
  \sum_{s_E \in S_E,a_E \in A_E} {P_E}_{s_E,a_E}(\phi^{-1}(s_A))+
  \rho_{\pi^*_E}(s_E, a_E) \\
&\hspace{3mm}=  p_A(s_A) + \gamma \sum_{s_E \in S_E,a_E \in A_E}
  {P_A}_{\phi(s_E),\psi(a_E)}(s_A)+ \rho^*_A(\phi(s_E), \psi(a_E)) \\
&\hspace{3mm}= p_A(s_A) + \gamma \sum_{s_A' \in S_A,a_A \in A_A} {P_A}_{s_A',a_A}(s_A)+
  \rho^*_A(s_A', a_A).
\end{aligned}
\end{equation*}
It follows that
$$\sum_{a_A \in A_A}\rho^*_A(s_A)
= p_A(s_A) + \gamma \sum_{s_A' \in S_A,a_A \in A_A} {P_A}_{s_A',a_A}(s_A)+
\rho^*_A(s_A', a_A).$$
Therefore, by
\citet[Theorem 6.9.1]{puterman2014markov},
$\rho^*_A$ is feasible in $M_A$, \ie there exists a
policy $\pi^*_A$ acting in $M_A$ with occupancy measure $\rho^*_A$.

(b) By condition \ref{cdt:3} and definition of $\rho^*_A$, the
expected return of $\pi^*_A$ in $M_A$ is then
\begin{equation*}
\begin{aligned}
&\sum_{s_A \in S_A, a_A \in A_A} \rho^*_A(s_A,a_A)R_A(s_A,a_A) \\
&\hspace{10mm}= \sum_{s_A \in S_A, a_A \in A_A}
\rho^*_E(\phi^{-1}(s_A),\psi^{-1}(a_A))R_E(\phi^{-1}(s_A),\psi^{-1}(a_A)) \\
&\hspace{10mm}= \sum_{s_E \in S_E, a_E \in A_E}
\rho^*_E(s_E,a_E)R_E(s_E,a_E)
\end{aligned}
\end{equation*}
Consider any policy $\pi_A$ in
$M'$. By condition \ref{cdt:3}, the expected return of $\pi_A$ is
$$\sum_{s_A \in S_A, a_A \in A_A} \rho_{\pi_A}(s_A,a_A)R_A(s_A,a_A) =
\sum_{s_E \in S_E, a_E \in A_E}
\rho_{\pi_A}(\phi(s_E),\psi(a_E))R_E(s_E,a_E).$$
Using the same arguments that we used to show that
$\rho^*_A$ is feasible in $M'$, we
can show that
$$(s_E,a_E) \mapsto \rho_{\pi_A}(\phi(s_E),\psi(a_E))$$
is feasible in $M$.
It follows by optimality of $\pi^*_E$ in $M$ that
\begin{equation*}
\begin{aligned}
\sum_{s_E \in S_E, a_E \in A_E}
\rho_{\pi_A}(\phi(s_E),\psi(a_E))R_E(s_E,a_E)
&\leq \sum_{s_E \in S_E,
  a_E \in A_E} \rho_{\pi^*_E}(\phi(s_E),\psi(a_E))R_E(s_E,a_E) \\
&= \sum_{s_A \in S_A, a_A \in A_A} \rho^*_A(s_A,a_A)R_A(s_A,a_A).
\end{aligned}
\end{equation*}
It follows that $\pi^*_A$ is optimal in $M'$.

(c) Notice that $$\xi: (s_E,a_E) \mapsto (\phi(s_E), \psi(a_E))$$ is an
isometry between $(S_E \times A_E,d_E)$ and $(S_A \times A_A,d_A)$,
where $d_E$ and $d_A$ and given, resp., by
$$(s_E,a_E) \mapsto d^S_E(s_E)+ d^A_E(a_E) \quad {\rm and}\quad
(s_A,a_A) \mapsto d^S_A(s_A)+ d^A_A(a_A).$$
Furthermore, by definition, $\rho^{\star}_A = \xi_{\sharp}(\rho^*_E)$.
Therefore by definition \ref{def:2}, $\pi^*_A$ is
isomorphic to $\pi^*_E$.

(d) Recall from the statement of the theorem that $\pi_{GW}$ is a
minimizer of $\mathcal{GW}(\pi^*_E, \pi_{GW})$. Since $\pi^*_A$ is
isomorphic to $\pi^*_E$, it follows from \cref{prop:1} that
$\mathcal{GW}(\pi^*_E, \pi^*_A) = 0$. Therefore $\mathcal{GW}(\pi^*_E,
\pi_{GW})$ must be 0. By \cref{prop:1}, it follows that
there exists an isometry
$$\chi:(\supp[\rho^*_E], d_E) \rightarrow
(\supp[\rho_{\pi_{GW}}], d_A)$$
such that $\rho_{\pi_{GW}} = \chi_{\sharp}(\rho^*_E)$. Notice that $\chi \circ \xi^{-1}|_{\supp[\rho^*_A]}$
is an isometry from $(\supp[\rho^*_A],
d_A)$ to $(\supp[\rho_{\pi_{GW}}], d_A)$ and $\rho_{\pi_{GW}} = (\chi \circ \xi^{-1}|_{\supp[\rho^*_A]})_{\sharp}(\rho^*_A)$.
It follows by definition $\ref{def:2}$ that $\pi_{GW}$ is isomorphic to $\pi^*_A$, an
optimal policy in $M_A$, which concludes the proof.
\end{proof}

\begin{remark}
\Cref{the:1} shows the possibilities and limitations of our method. It
shows that our method can recover optimal policies even though
arbitrary isometries are applied to the state and action spaces of the
expert's domain. Importantly, we don't need to know the isometries,
hence our method is applicable to a wide range of settings. We will
show empirically that our method produces strong results in other
settings where the environment are not isometric and don't even have
the same dimension. However, a limitation of our method is that
it recovers optimal policy only up to isometries. We will see that in
practice, running our method on different seeds enables to find an
optimal policy in the agent's domain.
\end{remark}

\begin{algorithm}[t]
  \begin{algorithmic}
    \State \textbf{Inputs:} expert demonstration $\tau$, metrics on the expert ($d_E$) and agent ($d_A$) space
    \State Initialize the imitation agent's policy $\pi_\theta$ and value estimates $V_\theta$
    \While{Unconverged}
    \State Collect an episode $\tau'$
    \State Compute $\mathcal{GW}(\tau, \tau')$ 
    \State Set pseudo-rewards $r$ with \cref{eq:rew}
    \State Update $\pi_\theta$ and $V_\theta$ to optimize the pseudo-rewards
    \EndWhile
  \end{algorithmic}
  \caption{Gromov-Wasserstein imitation learning
  from a single expert demonstration.}
  \label{alg:gwil}
\end{algorithm}



\subsection{Gromov-Wasserstein Imitation Learning}
Minimizing $\mathcal{GW}$ between an expert and agent requires derivatives
through the transition dynamics, which we typically don't have access to.
We introduce a reward proxy suitable for training an agent's policy that
minimizes $\mathcal{GW}$ via RL.
\Cref{fig:overview} illustrates the method.
For readability, we combine expert state and action variables $(s_E, a_E)$ into
single variables $z_E$, and similarly for agent state-action pairs.
Also, we define $Z_E = S_E\times A_E$ and $Z_A = S_A\times A_A$.

\begin{definition}
\label{def:rgw}
  Given an expert policy $\pi_E$ and an agent policy $\pi_A$, the Gromov-Wasserstein reward of the agent is defined as $r_{\mathcal{GW}}:\supp[\rho_{\pi_A}] \rightarrow \mathbb{R}$ given by
  \begin{equation*}
    \small
    r_{\mathcal{GW}}(z_A) =  -\frac{1}{\rho_{\pi}(z_A)}\sum_{\substack{z_E \in Z_E \\ z'_E \in Z_E\\z_A' \in Z_A}} |d_E(z_E,z_E')) - d_A(z_A,z_A')|^2 u^\star_{z_E,z_A}u^\star_{z_E',z_A'}
  \end{equation*}
where $u^\star$ is the coupling minimizing objective \ref{eq:defgw}.
\end{definition}

\begin{proposition}\label{prop:2}
  If $\pi_A$ minimizes $\mathcal{GW}(\pi_E, \pi_A)$, then $\pi_A$ is an optimal policy for the reward $r_{\mathcal{GW}}$ as defined in definition \ref{def:rgw}.
\end{proposition}
\begin{proof}
  Suppose that $\pi_A$ minimizes $\mathcal{GW}(\pi_E, \pi_A)$, then by definition \ref{def:1} $\pi_A$ maximizes
  \begin{equation*}
    \small
    \begin{aligned}
    = & -\sum_{\substack{z_E \in Z_E \\ z'_E \in Z_E\\z_A \in Z_A\\z'_A \in Z_A}} |d_E(z_E,z'_E) - d_A(z_A,z'_A)|^2 u^\star_{z_A,z_E}u^\star_{z'_A,z'_E} \\
    = & -\sum_{z_A \in \supp[\rho_{\pi_A}]} \frac{\rho_{\pi_A}(z_A)}{\rho_{\pi_A}(z_A)}\sum_{\substack{z_E \in Z_E \\ z'_E \in Z_E\\z'_A \in Z_A}} |d_E(z_E,z'_E) - d_A(z_A,z'_A)|^2 u^\star_{z_A,z_E}u^\star_{z'_A,z'_E} \\
    \end{aligned}
  \end{equation*}
Therefore, by \citet[Theorem 6.9.4]{puterman2014markov}, $\pi_A$ is an optimal policy for reward $r_{\mathcal{GW}}$.
\end{proof}

In practice we approximate the occupancy measures of $\pi$ by $\hat{\rho}_{\pi}(s,a) =
\frac{1}{T} \sum_{t=1}^{T} \mathbb{1}(s=s_t \wedge a=a_t)$ where $\tau
= (s_1,a_1,..,s_T,a_T)$ is a finite trajectory collected with
$\pi$. Assuming that all state-action pairs in the trajectory are
different\footnote{We can add the time step to the state to
distinguish between two identical state-action pairs in the
trajectory.}, $\hat{\rho}$ is a uniform distribution. Given an expert
trajectory $\tau_E$ and an agent trajectory $\tau_A$ \footnote{Note that the Gromov-Wasserstein distance defined in equ. (6) does not depend on the temporal ordering of the trajectories.}, the
(squared) Gromov-Wasserstein distance between the empirical occupancy measures is 

\begin{align}\label{eq:defgwuniform}
  \small
  \mathcal{GW}^2(\tau_E,\tau_A) = \min_{\theta \in \Theta^{T_E \times T_A}} \sum_{\substack{1 \leq i,i' \leq T_E \\ 1 \leq j,j' \leq T_A}} |d_E((s^E_i,a^E_i),(s^E_{i'},a^E_{i'})) - d_A((s^A_j,s^A_j),(s^A_{j'},a^A_{j'}))|^2 \theta_{i,j}\theta_{i',j'}
\end{align}
where $\Theta$ is the set of is the set of couplings between the atoms of the uniform measures defined by
$$\Theta^{T \times T'} = \left\{\theta \in \mathbb{R}^{T \times T'}\ \middle|\ \forall i \in [T], \sum_{j \in [T']} \theta_{i,j} = 1/T, \forall j \in [T'], \sum_{i \in [T]} \theta_{i,j} = 1/T'\right\}.$$

In this case the reward is given for every state-action pairs in the trajectory by:
\begin{equation}
  \begin{aligned}
    r(s^A_j, s^A_j) = -T_A\sum_{\substack{1 \leq i,i' \leq T_E \\ 1 \leq j' \leq T_A}} |d_E((s^E_i,a^E_i),(s^E_{i'}, a^E_{i'})) - d_A((s^A_j,s^A_j),(s^A_{j'},a^A_{j'}))|^2 \theta^\star_{i,j}\theta^\star_{i',j'}
    \label{eq:rew}
  \end{aligned}
\end{equation}
where $\theta^\star$ is the coupling minimizing objective \ref{eq:defgwuniform}.

In practice we drop the factor $T_A$ because it is the same for every state-action pairs in the trajectory.

\begin{remark}
The construction of our reward proxy is defined for any occupancy measure and extends to previous work optimizing optimal transport quantities via RL that assumes uniform occupancy measure in the form of a trajectory to bypass the need for derivatives through the transition dynamics \citep{dadashi2020primal,papagiannis2020imitation}.
\end{remark}

\textbf{Computing the pseudo-rewards.}
We compute the Gromov-Wasserstein distance using \citet[Proposition 1]{peyre2016gromov} and its gradient using \citet[Proposition 2]{peyre2016gromov}. To compute the coupling minimizing \ref{eq:defgwuniform}, we use the conditional gradient method as in \cite{ferradans2013regularized}.

\textbf{Optimizing the pseudo-rewards.}
The pseudo-rewards we obtain from $\mathcal{GW}$ for the imitation agent
enable us to turn the imitation learning problem into a
reinforcement learning problem \citep{sutton2018reinforcement}
to find the optimal policy for
the Markov decision process induced by the pseudo-rewards.
We consider agents with continuous state-action spaces and
thus do policy optimization with the soft actor-critic algorithm
\citep{haarnoja2018soft}.
\Cref{alg:gwil} sums up GWIL in the case where a single expert trajectory is given to approximate the expert occupancy measure.

\section{Experiments}


We propose a benchmark set for cross-domain IL methods consisting of 3 tasks and
aiming at answering the following questions:

\begin{enumerate}
\item \emph{Does GWIL recover optimal behaviors when the agent
    domain is a rigid transformation of the expert domain?}
  Yes, we demonstrate this with the maze in \cref{sec:exp:maze}.
\item \emph{Can GWIL recover optimal behaviors when the agent
    has different state and action spaces than the expert?}
  Yes, we show in \cref{sec:exp:cartpole} for \emph{slightly} different
  state-action spaces between the cartpole and pendulum,
  and in \cref{sec:exp:cheetah} for \emph{significantly} different
  spaces between a walker and cheetah.
\end{enumerate}

To answer these three questions, we use simulated continuous control tasks
implemented in Mujoco \citep{todorov2012mujoco}
and the DeepMind control suite \citep{tassa2018deepmind}. We include videos of learned policies on our project site\footnote{\href{https://arnaudfickinger.github.io/gwil/}{https://arnaudfickinger.github.io/gwil/}}. In all settings we use the Euclidean metric within the expert and agent spaces for $d_E$ and $d_A$. 

\subsection{Agent domain is a rigid transformation of the expert domain}
\label{sec:exp:maze}
We evaluate the capacity of IL methods to transfer to rigid
transformation of the expert domain by using the PointMass Maze
environment from \citet{hejna2020hierarchically}. The agent's domain
is obtained by applying a reflection to the expert's maze.
This task satisfies the condition of \cref{the:1} with
$\phi$ being the reflection through the central horizontal plan and
$\psi$ being the reflection through the $x$-axis in the action
space. Therefore by \cref{the:1}, the agent's optimal policy should be
isomorphic to the policy trained using GWIL.
By looking at the geometry
of the maze, it is clear that every policy in the isometry class of an
optimal policy is optimal. Therefore we expect GWIL to recover an
optimal policy in the agent's domain. \Cref{fig:maze_result} shows
that GWIL indeed recovers an optimal policy.

\begin{figure}[t]
\centering
\begin{subfigure}{.30\textwidth}
  \centering
  \includegraphics[width=\linewidth]{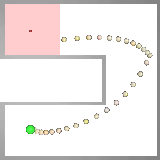}
  \caption{Expert}
  \label{fig:minienv} 
\end{subfigure}\hfil%
\begin{subfigure}{.30\textwidth}
  \centering
  \includegraphics[width=\linewidth]{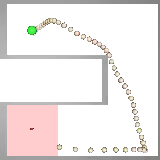}
  \caption{Agent}
\end{subfigure}
\caption{Given a single expert trajectory in the expert's domain (a), GWIL recovers an optimal policy in the agent's domain (b) without any external reward, as predicted by \cref{the:1}. The green dot represents the initial state position and the episode ends when the agent reaches the goal represented by the red square.}
\label{fig:maze_result}
\end{figure}


\begin{figure}[t]
  \centering
  \includegraphics[width=1\textwidth]{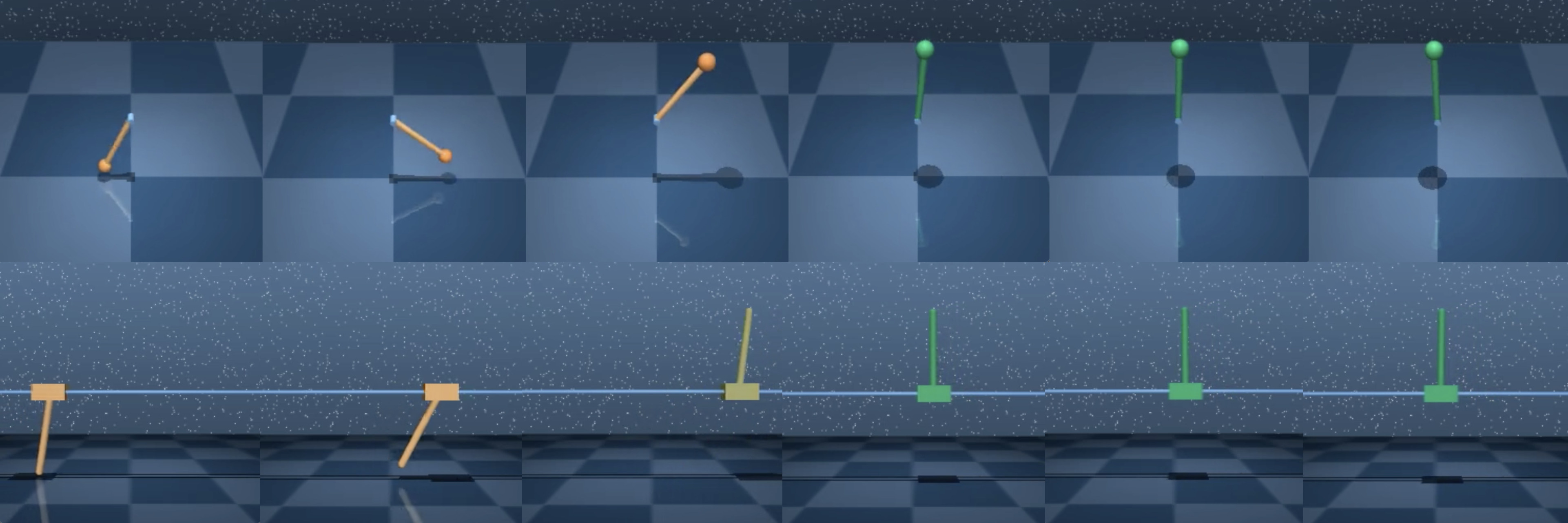}
  \caption{
    Given a single expert trajectory in the pendulum's domain (above), GWIL recovers the optimal behavior in the agent's domain (cartpole, below) without any external reward.
  }
  \label{fig:cartpole_result}
\end{figure}

\subsection{Agent and the expert have slightly different state and action spaces}
\label{sec:exp:cartpole}
We evaluate here the capacity of IL methods to transfer to
transformation that does not have to be rigid but description map
should still be apparent by looking at the domains. A good example of
such transformation is the one between the pendulum and cartpole.
The pendulum is our expert's domain while
cartpole constitutes our agent's domain.
The expert is trained on the swingup task.
%
%
Even though the transformation is not rigid, GWIL is able to recover
the optimal behavior in the agent's domain as shown in
\cref{fig:cartpole_result}. Notice that pendulum and cartpole do not
have the same state-action space dimension: The pendulum has 3 dimensions
while the cartpole has 5 dimensions. Therefore GWIL can indeed be applied
to transfer between problems with different dimension.


\subsection{Agent and the expert have significantly different state and action spaces}
\label{sec:exp:cheetah}
We evaluate here the capacity of IL methods to transfer to non-trivial
transformation between domains. A good example of such transformation
is two arbitrarily different morphologies from the DeepMind Control
Suite such as the cheetah and walker. The cheetah constitutes our expert's
domain while the walker constitutes our agent's domain.
The expert is trained on the run task.


Although the mapping between these two domains is not trivial,
minimizing the Gromov-Wasserstein solely enables the walker to
interestingly learn to move backward and forward by
imitating a cheetah. Since the isometry class of the optimal policy
\--- moving forward\--- of the cheetah and walker contains a
suboptimal element \---moving backward\---, we expect GWIL to recover
one of these two trajectories. Indeed, depending on the seed used,
GWIL produces a cheetah-imitating walker moving forward or a
cheetah-imitating walker moving backward, as shown in
\cref{fig:walker_result}.

\begin{figure}[t]
  \centering
  \begin{tikzpicture}
    \node[anchor=south west] at (0,0) {\includegraphics[width=1\textwidth]{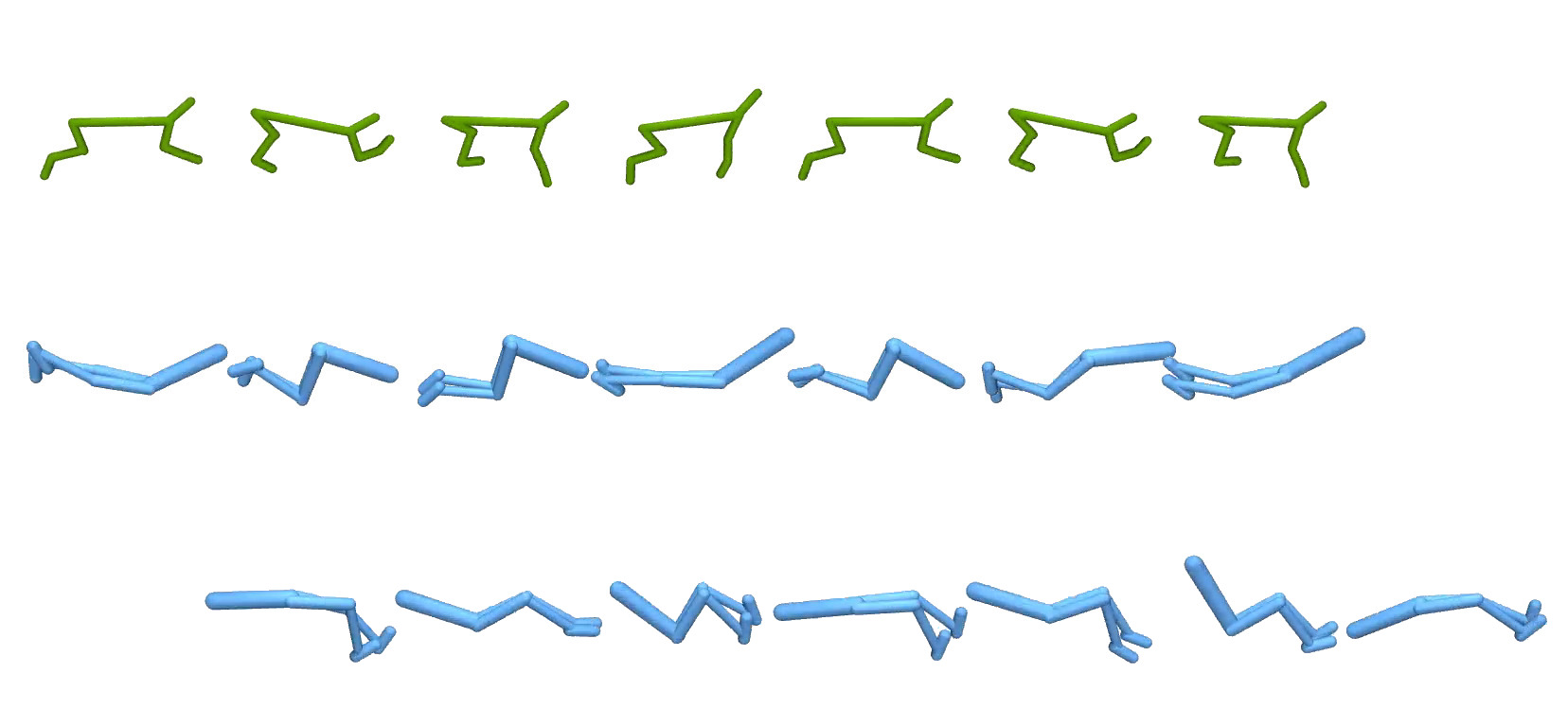}};
    \draw (0,0.7) -- (\textwidth,0.7);
    \draw (0,2.97) -- (\textwidth,2.97);
    \draw (0,4.94) -- (\textwidth,4.94);
    \draw[->] (12.2,5.4) -- (12.7,5.4);
    \draw[->] (12.5,3.4) -- (13,3.4);
    \draw[<-] (1,1.2) -- (1.5,1.2);
  \end{tikzpicture}
  \caption{
    Given a single expert trajectory in the cheetah's domain (above), GWIL recovers the two elements of the optimal policy's isometry class in the agent's domain (walker), moving forward which is optimal (middle) and moving backward which is suboptimal (below). Interestingly, the resulting walker behaves like a cheetah.
  }
  \label{fig:walker_result}
\end{figure}

\section{Conclusion}
Our work demonstrates that optimal transport distances
are a useful foundational tool for
cross-domain imitation across incomparable spaces.
Future directions include exploring:
\begin{enumerate}
\item \textbf{Scaling to more complex environments and
  agents} towards the goal of transferring the structure
  of many high-dimensional demonstrations of complex tasks
  into an agent.
\item The use of $\mathcal{GW}$ to help agents
  \textbf{explore in extremely sparse-reward environments}
  when we have expert demonstrations available
  from other agents.
\item How $\mathcal{GW}$ compares
  to \textbf{other optimal transport distances} that
  work apply between two metric MDPs,
  such as \citet{otglobalinvariances}, that have more
  flexibility over how the spaces are connected and what
  invariances the coupling has.
\item \textbf{Metrics aware of the MDP's temporal structure}
  such as \citet{zhou2009canonical,Vayer2020TimeSA,cohen2021aligning}
  that build on dynamic time warping \citep{muller2007dynamic}.
  The Gromov-Wasserstein ignores the
  temporal information and ordering present within
  the trajectories.
\end{enumerate}



\bibliography{refs}

@inproceedings{hejna2020hierarchically,
  title={Hierarchically decoupled imitation for morphological transfer},
  author={Hejna, Donald and Pinto, Lerrel and Abbeel, Pieter},
  booktitle={International Conference on Machine Learning},
  pages={4159--4171},
  year={2020},
  organization={PMLR}
}

@book{puterman2014markov,
  title={Markov decision processes: discrete stochastic dynamic programming},
  author={Puterman, Martin L},
  year={2014},
  publisher={John Wiley \& Sons}
}

@inproceedings{ammar2015unsupervised,
  title={Unsupervised cross-domain transfer in policy gradient reinforcement learning via manifold alignment},
  author={Ammar, Haitham Bou and Eaton, Eric and Ruvolo, Paul and Taylor, Matthew E},
  booktitle={Twenty-Ninth AAAI Conference on Artificial Intelligence},
  year={2015}
}

@article{stadie2017third,
  title={Third-person imitation learning},
  author={Stadie, Bradly C and Abbeel, Pieter and Sutskever, Ilya},
  journal={arXiv preprint arXiv:1703.01703},
  year={2017}
}

@inproceedings{abbeel2004apprenticeship,
  title={Apprenticeship learning via inverse reinforcement learning},
  author={Abbeel, Pieter and Ng, Andrew Y},
  booktitle={Proceedings of the twenty-first international conference on Machine learning},
  pages={1},
  year={2004}
}

@article{abbeel2010autonomous,
  title={Autonomous helicopter aerobatics through apprenticeship learning},
  author={Abbeel, Pieter and Coates, Adam and Ng, Andrew Y},
  journal={The International Journal of Robotics Research},
  volume={29},
  number={13},
  pages={1608--1639},
  year={2010},
  publisher={SAGE Publications Sage UK: London, England}
}

@article{raychaudhuri2021cross,
  title={Cross-domain Imitation from Observations},
  author={Raychaudhuri, Dripta S and Paul, Sujoy and van Baar, Jeroen and Roy-Chowdhury, Amit K},
  journal={arXiv preprint arXiv:2105.10037},
  year={2021}
}

@article{Vayer2020TimeSA,
  title={Time Series Alignment with Global Invariances},
  author={Titouan Vayer and L. Chapel and N. Courty and R{\'e}mi Flamary and Yann Soullard and R. Tavenard},
  journal={ArXiv},
  year={2020},
  volume={abs/2002.03848}
}

@InProceedings{otglobalinvariances,
  title = 	 {Towards Optimal Transport with Global Invariances},
  author =       {Alvarez-Melis, David and Jegelka, Stefanie and Jaakkola, Tommi S.},
  booktitle = 	 {Proceedings of the Twenty-Second International Conference on Artificial Intelligence and Statistics},
  pages = 	 {1870--1879},
  year = 	 {2019},
  editor = 	 {Chaudhuri, Kamalika and Sugiyama, Masashi},
  volume = 	 {89},
  series = 	 {Proceedings of Machine Learning Research},
  month = 	 {16--18 Apr},
  publisher =    {PMLR},
  pdf = 	 {http://proceedings.mlr.press/v89/alvarez-melis19a/alvarez-melis19a.pdf},
  url = 	 {https://proceedings.mlr.press/v89/alvarez-melis19a.html},
  abstract = 	 {Many problems in machine learning involve calculating correspondences between sets of objects, such as point clouds or images. Discrete optimal transport provides a natural and successful approach to such tasks whenever the two sets of objects can be represented in the same space, or at least distances between them can be directly evaluated. Unfortunately neither requirement is likely to hold when object representations are learned from data. Indeed, automatically derived representations such as word embeddings are typically fixed only up to some global transformations, for example, reflection or rotation. As a result, pairwise distances across two such instances are ill-defined without specifying their relative transformation. In this work, we propose a general framework for optimal transport in the presence of latent global transformations. We cast the problem as a joint optimization over transport couplings and transformations chosen from a flexible class of invariances, propose algorithms to solve it, and show promising results in various tasks, including a popular unsupervised word translation benchmark.}
}

@inproceedings{kim2020domain,
  title={Domain adaptive imitation learning},
  author={Kim, Kuno and Gu, Yihong and Song, Jiaming and Zhao, Shengjia and Ermon, Stefano},
  booktitle={International Conference on Machine Learning},
  pages={5286--5295},
  year={2020},
  organization={PMLR}
}

@inproceedings{liu2018imitation,
  title={Imitation from observation: Learning to imitate behaviors from raw video via context translation},
  author={Liu, YuXuan and Gupta, Abhishek and Abbeel, Pieter and Levine, Sergey},
  booktitle={2018 IEEE International Conference on Robotics and Automation (ICRA)},
  pages={1118--1125},
  year={2018},
  organization={IEEE}
}

@article{liu2019state,
  title={State alignment-based imitation learning},
  author={Liu, Fangchen and Ling, Zhan and Mu, Tongzhou and Su, Hao},
  journal={arXiv preprint arXiv:1911.10947},
  year={2019}
}

@inproceedings{sermanet2018time,
  title={Time-contrastive networks: Self-supervised learning from video},
  author={Sermanet, Pierre and Lynch, Corey and Chebotar, Yevgen and Hsu, Jasmine and Jang, Eric and Schaal, Stefan and Levine, Sergey and Brain, Google},
  booktitle={2018 IEEE international conference on robotics and automation (ICRA)},
  pages={1134--1141},
  year={2018},
  organization={IEEE}
}

@article{gupta2017learning,
  title={Learning invariant feature spaces to transfer skills with reinforcement learning},
  author={Gupta, Abhishek and Devin, Coline and Liu, YuXuan and Abbeel, Pieter and Levine, Sergey},
  journal={arXiv preprint arXiv:1703.02949},
  year={2017}
}

@article{zakka2021xirl,
  title={XIRL: Cross-embodiment Inverse Reinforcement Learning},
  author={Zakka, Kevin and Zeng, Andy and Florence, Pete and Tompson, Jonathan and Bohg, Jeannette and Dwibedi, Debidatta},
  journal={arXiv preprint arXiv:2106.03911},
  year={2021}
}

@article{berner2019dota,
  title={Dota 2 with large scale deep reinforcement learning},
  author={Berner, Christopher and Brockman, Greg and Chan, Brooke and Cheung, Vicki and D{\k{e}}biak, Przemys{\l}aw and Dennison, Christy and Farhi, David and Fischer, Quirin and Hashme, Shariq and Hesse, Chris and others},
  journal={arXiv preprint arXiv:1912.06680},
  year={2019}
}

@article{kober2013reinforcement,
  title={Reinforcement learning in robotics: A survey},
  author={Kober, Jens and Bagnell, J Andrew and Peters, Jan},
  journal={The International Journal of Robotics Research},
  volume={32},
  number={11},
  pages={1238--1274},
  year={2013},
  publisher={SAGE Publications Sage UK: London, England}
}

@article{levine2016end,
  title={End-to-end training of deep visuomotor policies},
  author={Levine, Sergey and Finn, Chelsea and Darrell, Trevor and Abbeel, Pieter},
  journal={The Journal of Machine Learning Research},
  volume={17},
  number={1},
  pages={1334--1373},
  year={2016},
  publisher={JMLR. org}
}

@article{vinyals2019grandmaster,
  title={Grandmaster level in StarCraft II using multi-agent reinforcement learning},
  author={Vinyals, Oriol and Babuschkin, Igor and Czarnecki, Wojciech M and Mathieu, Micha{\"e}l and Dudzik, Andrew and Chung, Junyoung and Choi, David H and Powell, Richard and Ewalds, Timo and Georgiev, Petko and others},
  journal={Nature},
  volume={575},
  number={7782},
  pages={350--354},
  year={2019},
  publisher={Nature Publishing Group}
}

@article{memoli2011gromov,
  title={Gromov--Wasserstein distances and the metric approach to object matching},
  author={M{\'e}moli, Facundo},
  journal={Foundations of computational mathematics},
  volume={11},
  number={4},
  pages={417--487},
  year={2011},
  publisher={Springer}
}

@inproceedings{Pomerleau1988bc1,
  title={ALVINN: An Autonomous Land Vehicle in a Neural Network},
  author={D. Pomerleau},
  booktitle={NIPS},
  year={1988}
}

@article{Pomerleau1991bc2,
  title={Efficient Training of Artificial Neural Networks for Autonomous Navigation},
  author={D. Pomerleau},
  journal={Neural Computation},
  year={1991},
  volume={3},
  pages={88-97}
}

@inproceedings{Ho2016GAIL,
  title={Generative Adversarial Imitation Learning},
  author={Jonathan Ho and S. Ermon},
  booktitle={NIPS},
  year={2016}
}

@article{goodfellow2014generative,
  title={Generative adversarial nets},
  author={Goodfellow, Ian and Pouget-Abadie, Jean and Mirza, Mehdi and Xu, Bing and Warde-Farley, David and Ozair, Sherjil and Courville, Aaron and Bengio, Yoshua},
  journal={Advances in neural information processing systems},
  volume={27},
  year={2014}
}

@book{villani2009optimal,
  title={Optimal transport: old and new},
  author={Villani, C{\'e}dric},
  volume={338},
  year={2009},
  publisher={Springer}
}

@article{dadashi2020primal,
  title={Primal wasserstein imitation learning},
  author={Dadashi, Robert and Hussenot, L{\'e}onard and Geist, Matthieu and Pietquin, Olivier},
  journal={arXiv preprint arXiv:2006.04678},
  year={2020}
}

@article{papagiannis2020imitation,
  title={Imitation Learning with Sinkhorn Distances},
  author={Papagiannis, Georgios and Li, Yunpeng},
  journal={arXiv preprint arXiv:2008.09167},
  year={2020}
}

@article{tassa2018deepmind,
  title={Deepmind control suite},
  author={Tassa, Yuval and Doron, Yotam and Muldal, Alistair and Erez, Tom and Li, Yazhe and Casas, Diego de Las and Budden, David and Abdolmaleki, Abbas and Merel, Josh and Lefrancq, Andrew and others},
  journal={arXiv preprint arXiv:1801.00690},
  year={2018}
}

@inproceedings{todorov2012mujoco,
  title={Mujoco: A physics engine for model-based control},
  author={Todorov, Emanuel and Erez, Tom and Tassa, Yuval},
  booktitle={2012 IEEE/RSJ International Conference on Intelligent Robots and Systems},
  pages={5026--5033},
  year={2012},
  organization={IEEE}
}

@inproceedings{coot,
 author = {Vayer, Titouan and Redko, Ievgen and  Flamary, R\'{e}mi and Courty, Nicolas},
 booktitle = {Advances in Neural Information Processing Systems},
 editor = {H. Larochelle and M. Ranzato and R. Hadsell and M. F. Balcan and H. Lin},
 pages = {17559--17570},
 publisher = {Curran Associates, Inc.},
 title = {CO-Optimal Transport},
 url = {https://proceedings.neurips.cc/paper/2020/file/cc384c68ad503482fb24e6d1e3b512ae-Paper.pdf},
 volume = {33},
 year = {2020}
}

@inproceedings{gwalign,
author = { Alvarez-Melis, David and  Jaakkola, Tommi S.},
title = {Gromov-Wasserstein Alignment of Word Embedding Spaces},
year = {2018},
booktitle = {Empirical Methods in Natural Language Processing}}

@inproceedings{peyre2016gromov,
  title={Gromov-Wasserstein averaging of kernel and distance matrices},
  author={Peyr{\'e}, Gabriel and Cuturi, Marco and Solomon, Justin},
  booktitle={International Conference on Machine Learning},
  pages={2664--2672},
  year={2016},
  organization={PMLR}
}

@article{sturm2012space,
  title={The space of spaces: curvature bounds and gradient flows on the space of metric measure spaces},
  author={Sturm, Karl-Theodor},
  journal={arXiv preprint arXiv:1208.0434},
  year={2012}
}

@article{vayer2020contribution,
  title={A contribution to Optimal Transport on incomparable spaces},
  author={Vayer, Titouan},
  journal={arXiv preprint arXiv:2011.04447},
  year={2020}
}

@article{peyre2019computational,
  title={Computational optimal transport: With applications to data science},
  author={Peyr{\'e}, Gabriel and Cuturi, Marco and others},
  journal={Foundations and Trends{\textregistered} in Machine Learning},
  volume={11},
  number={5-6},
  pages={355--607},
  year={2019},
  publisher={Now Publishers, Inc.}
}

@inproceedings{ferradans2013regularized,
  title={Regularized discrete optimal transport},
  author={Ferradans, Sira and Papadakis, Nicolas and Rabin, Julien and Peyr{\'e}, Gabriel and Aujol, Jean-Fran{\c{c}}ois},
  booktitle={International Conference on Scale Space and Variational Methods in Computer Vision},
  pages={428--439},
  year={2013},
  organization={Springer}
}

@inproceedings{ngrussel_irl,
author = {Ng, Andrew Y. and Russell, Stuart J.},
title = {Algorithms for Inverse Reinforcement Learning},
year = {2000},
isbn = {1558607072},
publisher = {Morgan Kaufmann Publishers Inc.},
address = {San Francisco, CA, USA},
booktitle = {Proceedings of the Seventeenth International Conference on Machine Learning},
pages = {663–670},
numpages = {8},
series = {ICML '00}
}

@article{haarnoja2018soft,
  title={Soft actor-critic algorithms and applications},
  author={Haarnoja, Tuomas and Zhou, Aurick and Hartikainen, Kristian and Tucker, George and Ha, Sehoon and Tan, Jie and Kumar, Vikash and Zhu, Henry and Gupta, Abhishek and Abbeel, Pieter and others},
  journal={arXiv preprint arXiv:1812.05905},
  year={2018}
}

@book{sutton2018reinforcement,
  title={Reinforcement learning: An introduction},
  author={Sutton, Richard S and Barto, Andrew G},
  year={2018},
  publisher={MIT press}
}

@inproceedings{cohen2021aligning,
  title={Aligning Time Series on Incomparable Spaces},
  author={Cohen, Samuel and Luise, Giulia and Terenin, Alexander and Amos, Brandon and Deisenroth, Marc},
  booktitle={International Conference on Artificial Intelligence and Statistics},
  pages={1036--1044},
  year={2021},
  organization={PMLR}
}

@article{zhou2009canonical,
  title={Canonical time warping for alignment of human behavior},
  author={Zhou, Feng and Torre, Fernando},
  journal={Advances in neural information processing systems},
  volume={22},
  pages={2286--2294},
  year={2009}
}

@article{muller2007dynamic,
  title={Dynamic time warping},
  author={M{\"u}ller, Meinard},
  journal={Information retrieval for music and motion},
  pages={69--84},
  year={2007},
  publisher={Springer}
}

@article{peng2018deepmimic,
  title={Deepmimic: Example-guided deep reinforcement learning of physics-based character skills},
  author={Peng, Xue Bin and Abbeel, Pieter and Levine, Sergey and van de Panne, Michiel},
  journal={ACM Transactions on Graphics (TOG)},
  volume={37},
  number={4},
  pages={1--14},
  year={2018},
  publisher={ACM New York, NY, USA}
}

@article{heess2017emergence,
  title={Emergence of locomotion behaviours in rich environments},
  author={Heess, Nicolas and TB, Dhruva and Sriram, Srinivasan and Lemmon, Jay and Merel, Josh and Wayne, Greg and Tassa, Yuval and Erez, Tom and Wang, Ziyu and Eslami, SM and others},
  journal={arXiv preprint arXiv:1707.02286},
  year={2017}
}

@article{zhu2018reinforcement,
  title={Reinforcement and imitation learning for diverse visuomotor skills},
  author={Zhu, Yuke and Wang, Ziyu and Merel, Josh and Rusu, Andrei and Erez, Tom and Cabi, Serkan and Tunyasuvunakool, Saran and Kram{\'a}r, J{\'a}nos and Hadsell, Raia and de Freitas, Nando and others},
  journal={arXiv preprint arXiv:1802.09564},
  year={2018}
}

@article{aytar2018playing,
  title={Playing hard exploration games by watching youtube},
  author={Aytar, Yusuf and Pfaff, Tobias and Budden, David and Paine, Tom Le and Wang, Ziyu and de Freitas, Nando},
  journal={arXiv preprint arXiv:1805.11592},
  year={2018}
}
\bibliographystyle{iclr2022_conference}

\newpage

\appendix
\section{Optimization of the proxy reward}
In this section we show that the proxy reward introduced in equation \ref{eq:rew} constitutes a learning signal that is easy to optimize using standards RL algorithms. Figure \ref{fig:gw_curves} shows proxy reward curves across 5 different seeds for the 3 environments. We observe that in each environment the SAC learner converges quickly and consistently to the asymptotic episodic return. Thus there is reason to think that the proxy reward introduced in equation \ref{eq:rew} will be similarly easy to optimize in other cross-domain imitation settings.
\begin{figure}[H]
  \centering
  \includegraphics[width=1\textwidth]{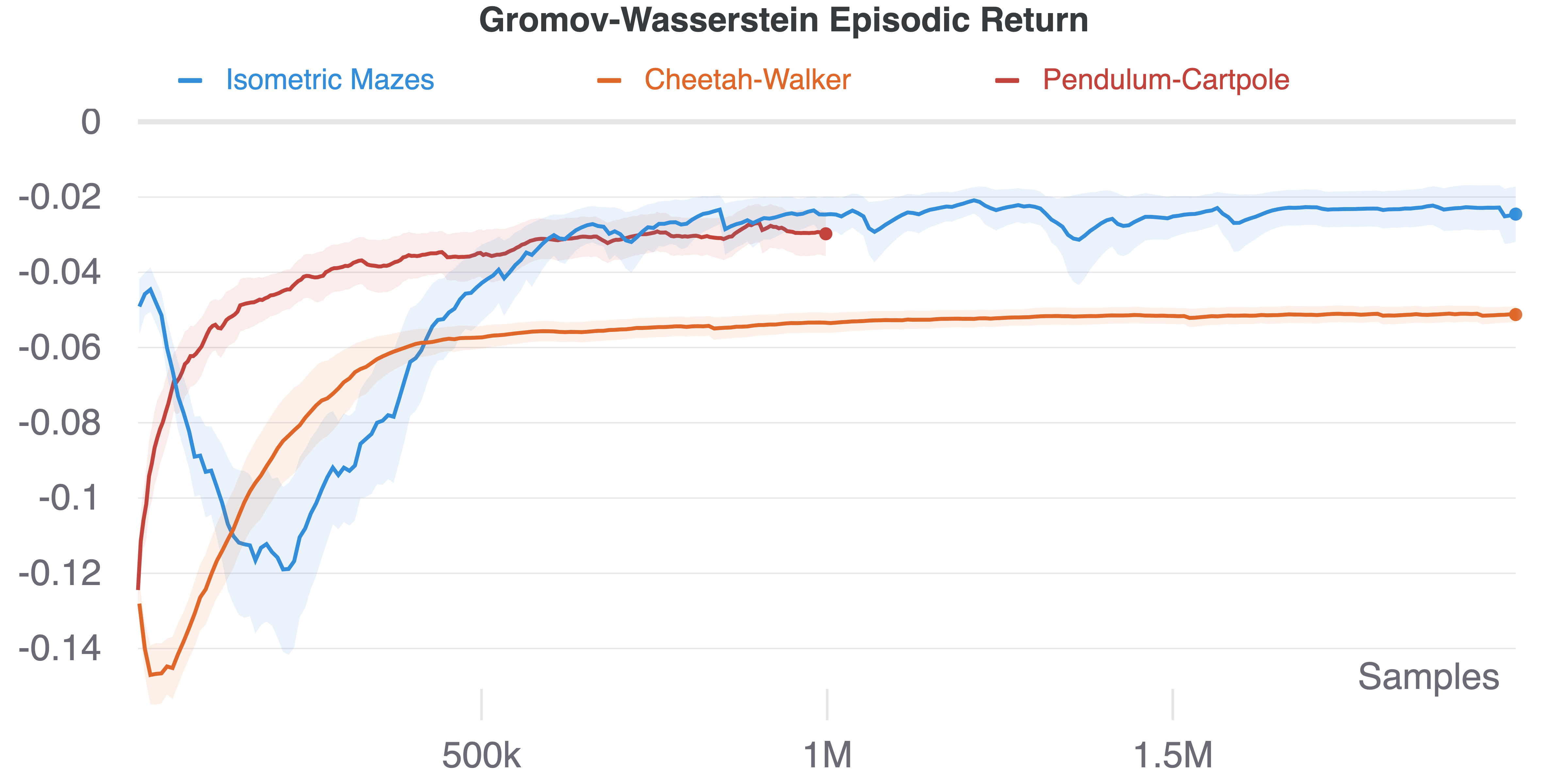}
  \caption{
    The proxy reward introduced in equation \ref{eq:rew} gives a learning signal that is easily optimized using a standard RL algorithm. 
  }
  \label{fig:gw_curves}
\end{figure}

\section{Transfer to sparse-reward environments}
In this section we show that GWIL can be used to facilitate learning in sparse-reward environments when the learner has only access to one expert demonstration from another domain. We compare GWIL to a baseline learner having access to a single demonstration from the same domain and minimizing the Wasserstein distance, as done in \cite{dadashi2020primal}. In these experiments, both agents are given a sparse reward signal in addition to their respective optimal transport proxy reward. We perform experiments in two sparse-reward environment. In the first environment, the agent controls a point mass in a maze and obtain a non-zero reward only if it reaches the end of the maze. In the second environment, which is a sparse version of cartpole, the agent controls a cartpole and obtains a non-zero reward only if he can maintain the cartpole up for 10 consecutive time steps. Note that a SAC agent fails to learn any meaningful behavior in both environments. Figure \ref{fig:sparse_maze_samples_gw_w} shows that GWIL is competitive with the baseline learner in the sparse maze environment even though GWIL has only access to a demonstration from another domain, while the baseline learner has access to a demonstration from the same domain. Thus there is reason to think that GWIL efficiently and reliably extracts useful information from the expert domain and hence should work well in other cross-domain imitation settings. 

\begin{figure}[H]
  \centering
  \includegraphics[width=1\textwidth]{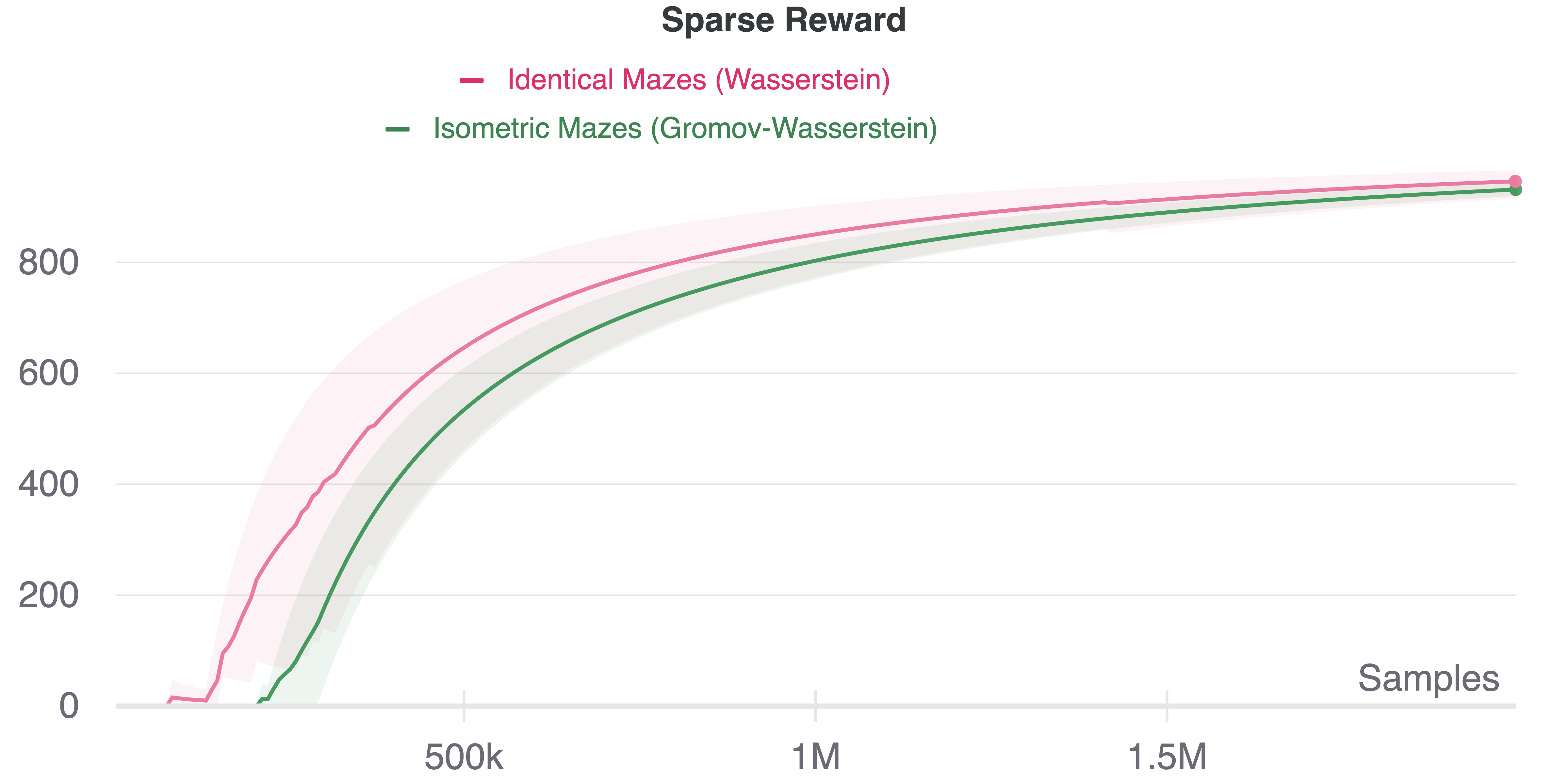}
  \includegraphics[width=1\textwidth]{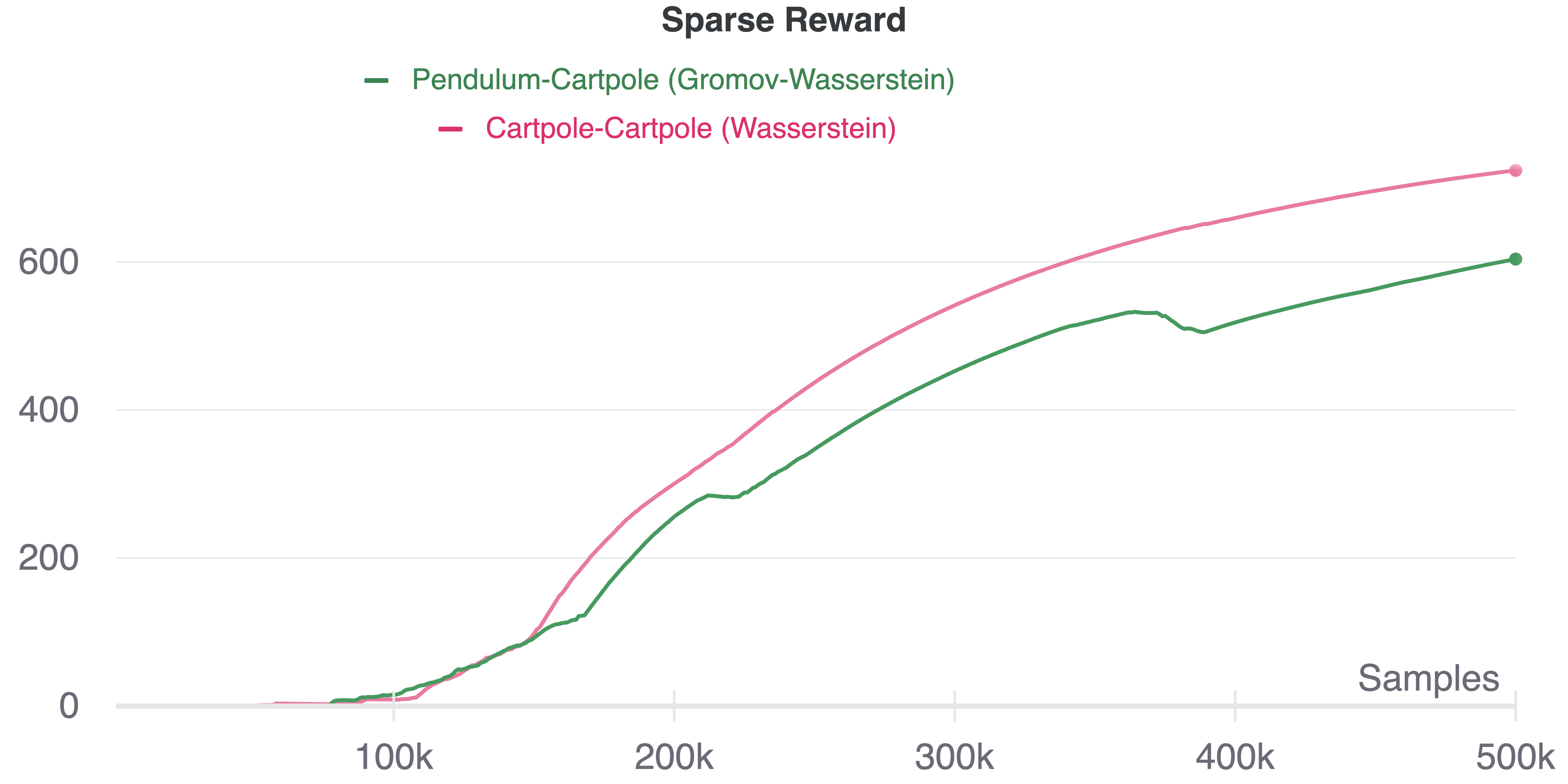}
  \caption{
    In sparse-reward environments, GWIL obtains similar performance than a baseline learner minimizing the Wasserstein distance to an expert in the same domain.     
  }
  \label{fig:sparse_maze_samples_gw_w}
\end{figure}

\section{Scalability of GWIL}
In this section we show that our implementation of GWIL offers good performance in terms of wall-clock time. Note that the bottleneck of our method is in the computation of the optimal coupling which only depends on the number of time steps in the trajectories, and not on the dimension of the expert and the agent. Hence our method naturally scales with the dimension of the problems. Furthermore, while we have not used any entropy regularizer in our experiments, entropy regularized methods have been introduced to enable Gromov-Wasserstein to scale to demanding machine learning tasks and can be easily incorporated into our code to further improve the scalability. Figure \ref{fig:sparse_maze_time_w_gw} compares the time taken by GWIL in the maze with the time taken by the baseline learner introduced in the previous section. It shows that imitating with Gromov-Wasserstein requires the same order of time than imitating with Wasserstein. Figure \ref{fig:hor_vel_walker_gw_sac} compares the wall-clock time taken by a walker imitating a cheetah using GWIL to reach a walking speed (i.e., a horizontal velocity of 1) and the wall-clock time taken by a SAC walker trained to run. It shows that a GWIL walker imitating a cheetah reaches a walking speed faster than a SAC agent trained to run. Even though the SAC agent is optimizing for standing in addition to running, it was not obvious that GWIL could compete with SAC in terms of wall-clock time. These results gives hope that GWIL has the potential to scale to more complex problems (possibly with an additional entropy regularizer) and be a useful way to learn by analogy.

\begin{figure}[H]
  \centering
  \includegraphics[width=1\textwidth]{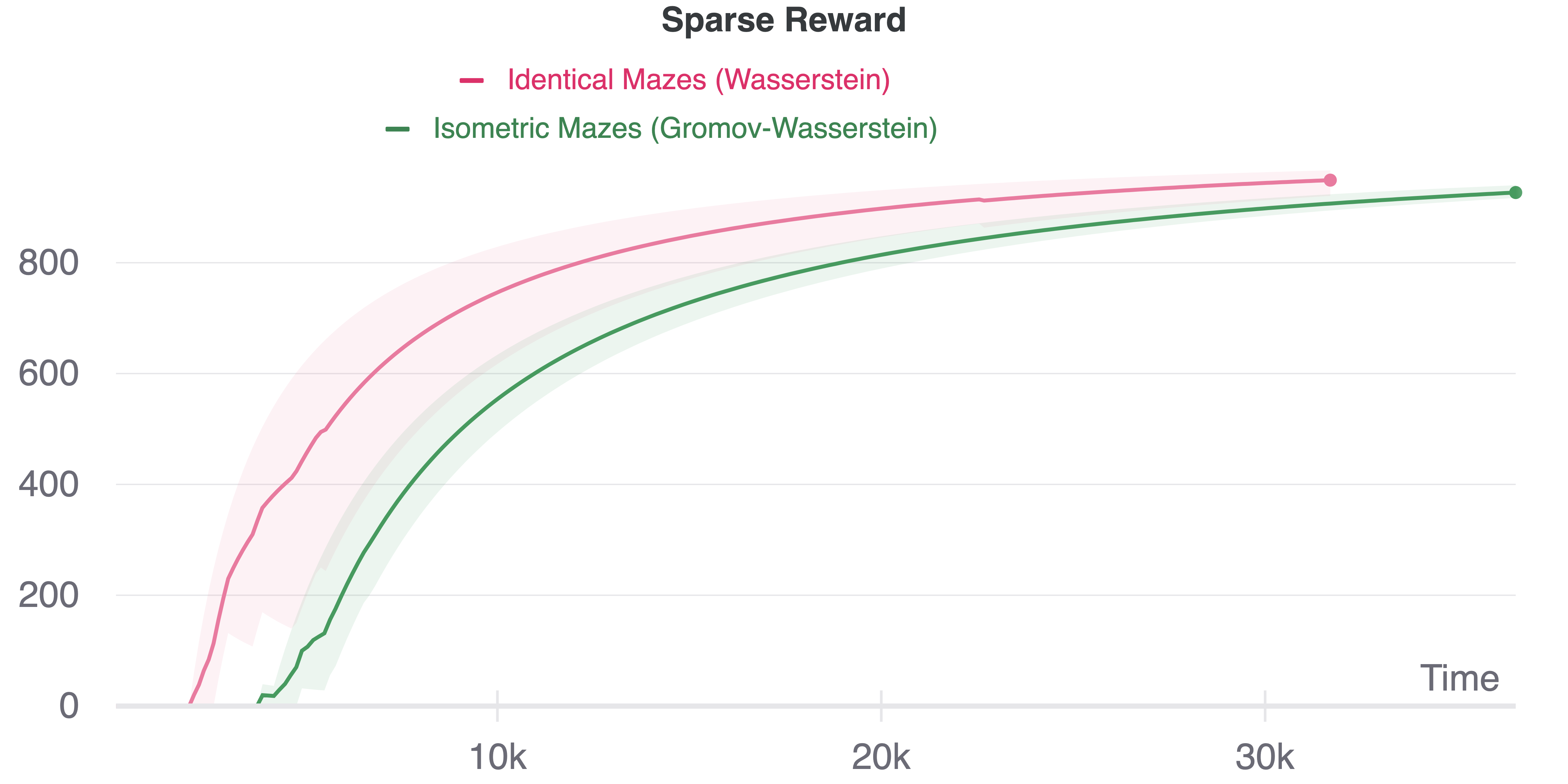}
  \caption{
    In the sparse maze environment, GWIL requieres the same order of wall-clock time than a baseline learner minimizing the Wasserstein distance to an expert in the same domain.     
  }
  \label{fig:sparse_maze_time_w_gw}
\end{figure}

\begin{figure}[H]
  \centering
  \includegraphics[width=1\textwidth]{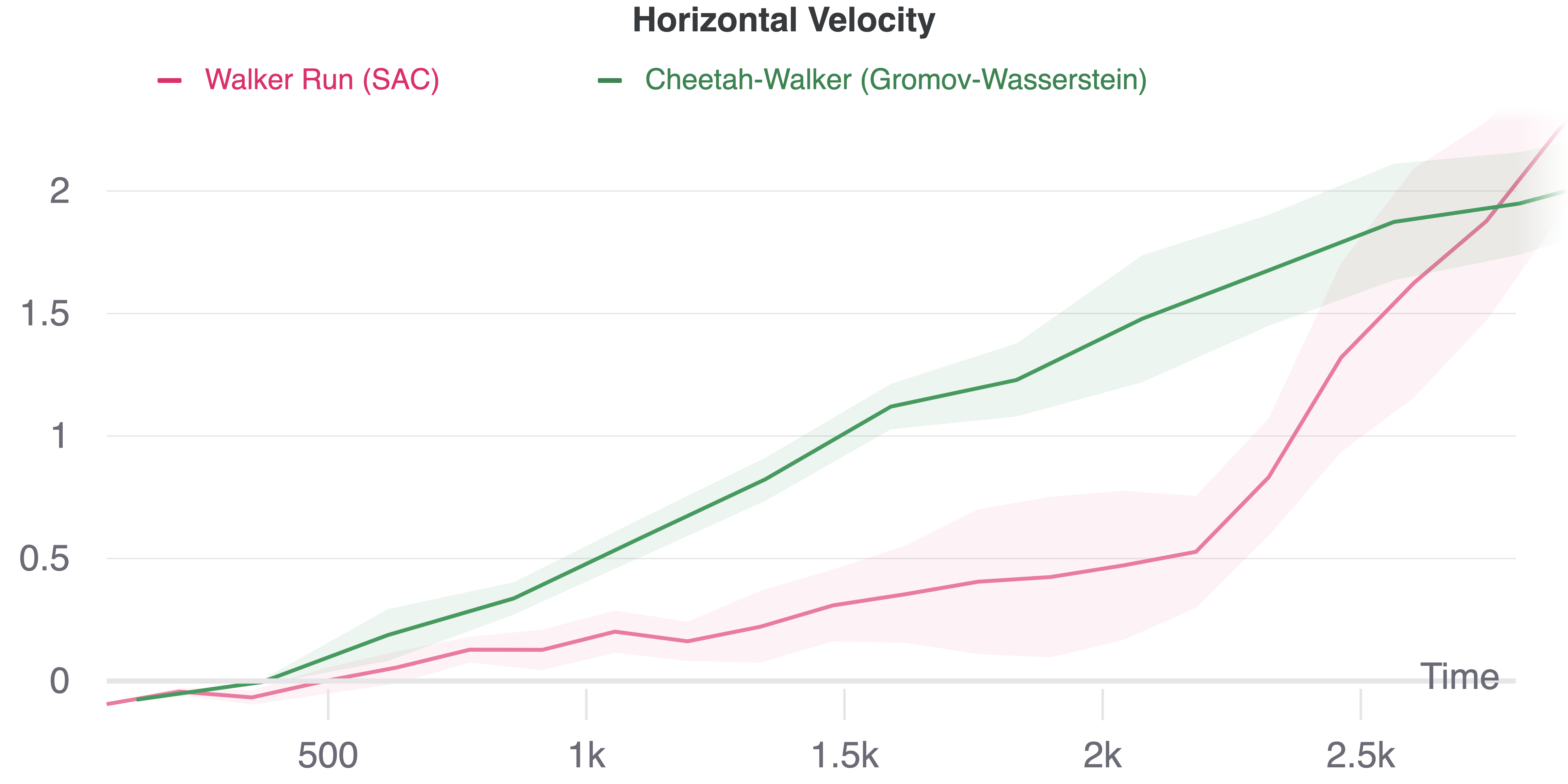}
  \caption{
    A GWIL walker imitating a cheetah reaches a walking speed faster than a SAC walker trained to run in terms of wall-clock time.
  }
  \label{fig:hor_vel_walker_gw_sac}
\end{figure}

\end{document}